%
%
\documentclass[a4paper,12pt]{article}

\usepackage{xr}

\usepackage{color}
\usepackage{authblk}
\usepackage{hyperref}
%
%

\usepackage{amsmath,amsfonts,amssymb,graphicx, algorithmic}
\usepackage{natbib,dsfont,bbm}
\usepackage{amsthm}

\usepackage[boxruled]{algorithm2e}
\setlength{\algomargin}{.2cm}
\SetAlFnt{\normalsize}
\SetAlCapFnt{\normalsize}
\SetAlCapNameFnt{\normalsize}

\makeatletter
\renewcommand{\algocf@caption@boxruled}{%
  \hrule
  \hbox to \hsize{%
    \vrule\hskip-0.4pt
    \vbox{   
       \vskip\interspacetitleboxruled%
       \vskip.07cm
       \hspace{.2cm}\unhbox\algocf@capbox\hfill
       \vskip.05cm
       \vskip\interspacetitleboxruled
       }%
     \hskip-0.4pt\vrule%
   }\nointerlineskip%
}%
\makeatother

%
%

\usepackage{enumerate}

%

%
\def\beginrefs{\begin{list}%
        {[\arabic{equation}]}{\usecounter{equation}
         \setlength{\leftmargin}{2.0truecm}\setlength{\labelsep}{0.4truecm}%
         \setlength{\labelwidth}{1.6truecm}}}
\def\endrefs{\end{list}}


\newtheorem{theorem}{Theorem}
\newtheorem{lemma}[theorem]{Lemma}

\newtheorem{remark}[theorem]{Remark}
\newtheorem{assumption}[theorem]{Assumption}

\numberwithin{theorem}{section}


\newcommand\E{\mathbb{E}}

\newcommand\A{\mathcal{A}}
\newcommand\M{\mathcal{M}}

\newcommand\R{\mathbb{R}}

\renewcommand\S{\mathcal{S}}

\newcommand\D{\mathcal{D}}

\newcommand{\argmin}{\mathop{\mathrm{argmin}}}  
\newcommand{\argmax}{\mathop{\mathrm{argmax}}}

\newcommand\KL{\mathtt{KL}}

\newcommand{\norm}[1]{\left\lVert #1 \right\rVert}

\usepackage{listings}
\usepackage{xcolor}

\definecolor{codegreen}{rgb}{0,0.6,0}
\definecolor{codegray}{rgb}{0.5,0.5,0.5}
\definecolor{codepurple}{rgb}{0.58,0,0.82}
\definecolor{backcolour}{rgb}{0.95,0.95,0.92}

\lstdefinestyle{mystyle}{
	backgroundcolor=\color{backcolour},   
	commentstyle=\color{codegreen},
	keywordstyle=\color{magenta},
	numberstyle=\tiny\color{codegray},
	stringstyle=\color{codepurple},
	basicstyle=\linespread{1}\ttfamily\footnotesize,
	breakatwhitespace=false,         
	breaklines=true,                 
	captionpos=b,                    
	keepspaces=true,                 
	numbers=left,                    
	numbersep=5pt,                  
	showspaces=false,                
	showstringspaces=false,
	showtabs=false,                  
	tabsize=2
}

\title{Linear Convergence for Natural Policy Gradient with Log-linear Policy Parametrization}

\author[*]{Carlo Alfano}
\author[*]{Patrick Rebeschini}
\affil[*]{Department of Statistics, University of Oxford}
\date{\vspace{-5ex}}

\usepackage{geometry}
 \geometry{
 a4paper,
 total={170mm,257mm},
 left=20mm,
 top=20mm,
 }

\begin{document}
\maketitle

\begin{abstract}
	We analyze the convergence rate of the \emph{unregularized} natural policy gradient algorithm with log-linear policy parametrizations in infinite-horizon discounted Markov decision processes. In the deterministic case, when the Q-value is known and can be approximated by a linear combination of a known feature function up to a bias error, we show that a geometrically-increasing step size yields a linear convergence rate towards an optimal policy.
	We then consider the sample-based case, when the best representation of the Q-value function among linear combinations of a known feature function is known up to an estimation error.
	In this setting, we show that the algorithm enjoys the same linear guarantees as in the deterministic case up to an error term that depends on the estimation error, the bias error, and the condition number of the feature covariance matrix. Our results build upon the general framework of policy mirror descent and extend previous findings for the softmax tabular parametrization to the log-linear policy class.
\end{abstract}
\section{Introduction}
Sequential decision-making represents a framework of paramount importance in modern statistics and machine learning. In this framework, an agent sequentially interacts with an environment to maximize notions of reward. In these interactions, an agent observes its current state $s\in\S$, takes an action $a\in\A$ according to a policy that associates to each state a probability distribution over actions, receives a reward, and transitions to a new state. Reinforcement Learning (RL) focuses on the case where the agent does not have complete knowledge of the environment dynamics.

One of the most widely-used classes of algorithms for RL is represented by policy optimization. In policy optimization algorithms, an agent iteratively updates a policy that belongs to a given parametrized class with the aim of maximizing the expected sum of discounted rewards, where the expectation is taken over the trajectories induced by the policy. Many types of policy optimization techniques have been explored in the literature, such as policy gradient methods \citep{RN158}, natural policy gradient methods \citep{RN159}, trust region policy optimization \citep{RN214}, and proximal policy optimization \citep{RN182}.
Thanks to the versatility of the policy parametrization framework, in particular the possibility of incorporating flexible approximation schemes such as neural networks, these methods have been successfully applied in many settings.
However, a complete theoretical justification for the success of these methods is still lacking.

The simplest and most understood setting for policy optimization is the tabular case, where both the state space $\S$ and the action space $\A$ are finite and the policy has a direct parametrization,
i.e.\ it assigns a probability to each state-action pair. This setting has received a lot of attention in recent years and has seen several developments \citep{RN265, RN266}.
Its analysis is particularly convenient due to the decoupled nature of the parametrization, where the probability distribution over the action space that the policy assigns to each state can be updated and analyzed separately for each state.
This leads to a simplified analysis, where it is often possible to drop discounted visitation distribution terms in the policy update and take advantage of the contractivity property typical of value and policy iteration methods.
Recent results involve, in particular, natural policy gradient (NPG) and, more generally, policy mirror descent, showing how specific choices of learning rates yield linear convergence to the optimal policy for several formulations and variations of these algorithms \citep{RN150,RN242,RN268,RN266, RN269, RN270, RN273, RN272}.

Two of the main shortfalls of these methods are their computational and sample complexities, which depend polynomially on the cardinality of the state and action spaces, even in the case of linear convergence. Indeed, by design, these algorithms need to update at each iteration a parameter or a probability for all state-action pairs, which has an operation cost proportional to $|\S||\A|$. Furthermore, in order to preserve linear convergence in the sample-based case, the aforementioned works assume that the worst estimate ($\ell_\infty$ norm) of $Q^\pi(s,a)$---which is the expected sum of discounted rewards starting from the state-action pair $(s,a)$ and following a policy $\pi$---is exact up to a given error threshold. Without further assumptions, meeting this threshold requires a number of samples that depends polynomially on $|\S||\A|$.

A promising approach to deal with large and high-dimensional spaces that is recently being explored is that of assuming that the environment has a low-rank structure and that, as a consequence, it can be described or approximated by a lower dimensional representation. 
In particular, a popular framework is that of linear function approximation, which consists in assuming that quantities of interest in the problem formulation, such as the transition probability (Linear MDPs) or the action-value function $Q^\pi$ of a policy $\pi$, can be approximated by a linear combination of a certain $d$-dimentional feature function $\phi:\S\times\A\rightarrow\R^d$ up to a bias error $\varepsilon_\text{bias}$. This linear assumption reduces the dimensionality of the problem to that of the feature function.
In this setting, many researchers have proposed methods to learn the best representation $\phi$ \citep{RN220, RN236, RN240, RN276} and to exploit it to design efficient vairations of the upper confidence bound (UCB) algorithm, for instance~\citep{RN162, RN237, RN275}.

When applying the framework of linear approximation to policy optimization, researchers typically adopt the log-linear policy class, where a policy $\pi_\theta$ parametrized by $\theta\in\R^d$ is defined as proportional to $\exp(\theta^\top\phi)$. For this policy class, several works have obtained improvements in terms of computational and sample complexity, as the policy update requires a number of operations that scales only with the feature dimension $d$ and the estimation assumption to retain convergence rates in the sample-based setting is weaker than the tabular counterpart. In fact, theoretical guarantees for these algorithms only assume the expectation of $Q^\pi$ over a known distribution on the state and action spaces to be exact up to a statistical error $\varepsilon_\text{stat}$. In the linear function approximation setting meeting this assumption typically requires a number of samples that is only a function of $d$ and does not depend on $|\S|$ and $|\A|$ \citep{bhandari2018finite}. However, a complete understanding of the convergence rate of policy optimization methods in this setting is still missing. Recent results include \emph{sublinear} convergence rates for unregularized NPG \citep{RN265, RN274, RN277, RN279} and linear convergence rates for \emph{entropy-regularized} NPG with bounded updates \citep{RN280}. 


Our work fills the gap between the aforementioned findings and it extends the analysis and results of the tabular setting to the linear function approximation setting. In particular, we show that, under the standard assumptions on the $(\varepsilon_\text{stat},\varepsilon_\text{bias})$-approximation of $Q^\pi$ mentioned above, a choice of geometrically-increasing step-sizes leads to linear convergence of NPG for the log-linear policy class in both deterministic and sample-based settings. Our result directly improves upon the sub-linear iteration complexity of NPG previously established for the log-linear policy class by \cite{RN265} and \cite{RN279} and it removes the need for entropy regularization and bounded step-sizes used by \cite{RN280}, under the same assumptions on the linear approximation of $Q^\pi$. 
By extending the linear convergence rate of NPG from the tabular softmax parametrization to the setting of log-linear policy parametrizations, our result directly addresses the research direction outlined in the conclusion of \cite{RN266}, and it overcomes the aforementioned limitations of the tabular settings.


Our analysis is based on the equivalence of NPG and policy mirror descent with $\KL$ divergence~\citep{RN184}, which has been exploited for applying mirror-descent-type analysis to NPG by several works, such as \cite{RN265,RN279, RN280}. The advantages of this equivalence are twofold.
Firstly, NPG crucially ensures a simple update rule, i.e.\ $\log\pi_{t+1}(a|s) = \log\pi_t(a|s) + \eta_tQ^{\pi_t}(s,a)$ , which in the particular case of the log-linear policy class translates into $\theta_{t+1}^\top\phi(s,a) = \theta_t^\top\phi(s,a) + \eta_tQ^{\pi_t}(s,a)$.
Secondly, the mirror descent setup is useful to iteratively control the updates and the approximation errors, e.g.\ through tools like the three-point descent lemma (see \eqref{eq:breg} below), and to induce telescopic sums or recursions that are often used to analyze the converge rate of the last iterate.

In our work, we show how to exploit these advantages to use the linear approximation of $Q^\pi$ in the analysis and, consequently, make weaker assumptions on the accuracy of the estimation of $Q^\pi$ w.r.t.\ the tabular setting. While previous results for the tabular setting \citep{RN150,RN242,RN266} require an $\ell_\infty$ norm bound on the estimation error, i.e.\ $\lVert \widehat{Q}^\pi-Q^\pi \rVert_\infty\leq\tau$,
our convergence guarantee depends on the expected error of the estimate, i.e.\ $\E (\widehat{Q}^\pi(s,a)-Q^\pi(s,a) )^2\leq\varepsilon_\text{stat}$, where the expectation is taken w.r.t.\ the discounted state visitation distribution induced by the policy $\pi$ and the uniform distribution over the action space. As shown by previous works \citep{bhandari2018finite}, this assumption can be satisfied in the linear approximation setting with a number of samples that does not depend on the cardinality of the state and action spaces.

We acknowledge the concurrent work by \cite{yuan2023linear}, who also obtained linear convergence guarantees for log-linear policies, through an analysis similar to ours.

The paper is organized as follows. Section~\ref{sec:setting} introduces the main setting of RL, and Section~\ref{sec:algo} introduces the algorithm framework we consider. Section~\ref{sec:main} contains the linear approximation set-up and our main result. Section~\ref{sec:analysis} presents the analysis of our main result, with the conclusions outlined in Section~\ref{conc}.

\section{Setting}
\label{sec:setting}
Consider an agent that acts in a discounted Markov Decision Process (MDP) $\M=(\S,\A,P,r,\gamma,\mu)$, where: $\S$ is the possibly infinite state space and $\A$ is the finite action space; $P(s'| s,a)$ is the transition probability; $r(s,a)\in [0,1]$ is the reward function; $\gamma$ is the discount factor; and $\mu$ is the starting state distribution.
A policy $\pi:\S\times\A\rightarrow\R$ is a probability distribution over $\A$ that represents the probability that an agent takes action $a$ when in state $s$. At time $t$ denote the current state and action by $s_t$ and $a_t$.

For a policy $\pi$, let $V^\pi:\S\rightarrow\R$ be the respective value function, which is defined as the expected discounted cumulative reward with starting state $s_0=s$, namely,
\[V^\pi_s= \mathbb{E}\left[\sum_{t=0}^{\infty}\gamma^t r(s_t,a_t) \bigg| \pi, s_0 = s\right],\]
where $a_t\sim\pi(\cdot|s_t)$ and $s_{t+1}\sim P(\cdot|s_t,a_t)$. Let $V^\pi(\mu) = \E_{s\sim\mu}V^\pi_s$. The agent aims to find an optimal policy $\pi^\star\in \argmax_\pi V^\pi(\mu)$. 

For a policy $\pi$, let $Q^\pi:\S\times\A\rightarrow\R$ be the respective action-value function, or Q-function, which is defined as the expected discounted cumulative reward with starting state $s_0=s$ and starting action $a_0=a$, namely,
\[Q^\pi(s, a) =\E\left[\sum_{t=0}^\infty \gamma^t r(s_t, a_t)\bigg|\pi,  s_0=s,a_0=a\right],\]
where $a_t\sim\pi(\cdot|s_t)$ and $s_{t+1}\sim P(\cdot|s_t,a_t)$. 

Define the discounted state visitation distribution \citep{RN158}
\[d_{\mu}^\pi(s)= (1-\gamma)\E_{s_0 \sim \mu}\sum_{t=0}^{\infty}\gamma^t P(s_t = s|\pi, s_0),\]
and the discounted state-action visitation distribution
\[d_{\rho}^\pi(s,a)= (1-\gamma)\E_{s_0,a_0 \sim \rho}\sum_{t=0}^{\infty}\gamma^t P(s_t = s|\pi, s_0, a_0),\]
where the trajectory ${(s_t,a_t)}_{t\geq0}$ is generated by the MDP following policy $\pi$ and $\rho$ is a distribution over $\S\times\A$. Then we can formulate the performance difference lemma \citep{RN281}, a tool that will prove useful in our analysis,
\begin{equation}
	\label{lemma:pdiff}
	V^\pi(\mu)-V^{\bar{\pi}}(\mu) = \frac{1}{1-\gamma}\E_{s\sim d_\mu^{\bar{\pi}}}\sum_{a\in\A}Q^\pi(s,a)(\pi(a|s)-\bar{\pi}(a|s)).
\end{equation}



\subsection{Notation}
We make the following definitions for ease of exposition. As to the policy, let $\pi_s := \pi(s,\cdot)$ and $\pi^t := \pi_{\theta_t}$. For two functions $f$ and $g$, denote $(f\circ g)(x,y) =f(x)g(y) $. As to the action-value function, let $Q^\pi_s := Q^\pi(s,\cdot)$ and $Q^t(s,a) := Q^{\pi_t}(s,a)$. As to the discounted visitation distributions, let $d^t_\mu := d^{\pi^t}_\mu$, $d^t = d^t_\mu \circ \text{Unif}_\A$, and $d^\star := d^\star_\mu \circ \text{Unif}_\A$. Lastly, denote $\KL^\star_t = \E_{s\sim d^\star_\mu}\KL(\pi^\star_s,\pi^t_s)$.

\section{Natural policy gradient and mirror descent}
\label{sec:algo}
\textit{Policy class} --- In this work, we consider the log-linear policy parametrization \citep{RN265}. Let $\theta\in\R^d$ be a parameter vector and $\phi:\S\times\A\rightarrow\R^d$ be a feature function. Then the policy class consists of all policies of the form:
\[\pi_\theta(a|s) = \frac{\exp(\theta^\top\phi(s,a))}{\sum_{a'\in\A}\exp(\theta^\top\phi(s,a'))}.\]

\textit{Natural Policy Gradient} --- We formulate NPG through mirror descent. The update at time $t+1$ is

\begin{equation}
	\label{eq:update}
	\nabla h(\pi^{t+1}_s) = \nabla h(\pi^t_s) + \eta_t Q^t_s\qquad \forall s\in\S, 
\end{equation}

where $h(\pi_s) = \sum_{a\in\A}\pi(a|s)\log\pi(a|s)$, is the entropy mirror map. This is equivalent to the update
\[\pi_{t+1}(s,a) \propto \pi_t(s,a) e^{\eta_t Q^t(s,a)}\qquad\forall s,a\in\S,\A,\]

or, as in Algorithm \ref{alg}, to requiring that $\theta_{t + 1}$ is such that
 \[\theta_{t + 1}^\top\phi(s,a) = \theta_t^\top\phi(s,a) + \eta_t Q^t(s,a) \qquad \forall s,a\in\S,\A.\]

In the tabular setting, we have $d = |\S||\A|$, $\phi(s,a)$ is a vector of all zeros except a one in the position assigned to $(s,a)$, and the update is equivalent to the one analyzed by \citep{RN265}. 
This mirror descent setup allows us to use standard mirror descent tools in the analysis \citep{RN186, RN279, RN266}, such as three-point descent lemma
\begin{equation}
	\label{eq:breg}
	D_h(\pi_s,\pi^t_s) - D_h(\pi_s,\pi^{t+1}_s) - D_h(\pi^{t+1}_s,\pi^t_s) = \langle \nabla h(\pi^t_s)-\nabla h(\pi^{t+1}_s),\pi^{t+1}_s - \pi_s\rangle\quad\forall\pi_s,
\end{equation}

which in this setting can be expressed as

\begin{equation}
	\label{eq:kl}
	\KL(\pi_s,\pi^t_s) - \KL(\pi_s,\pi^{t+1}_s) - \KL(\pi^{t+1}_s,\pi^t_s) = -\eta_t\langle Q^t_s,\pi^{t+1}_s - \pi_s\rangle\quad\forall\pi_s.	
\end{equation}

These tools, along with the performance difference lemma~\eqref{lemma:pdiff}, ensure that we can control the increase of the value function for each policy update. When we only have access to an approximation $\widetilde{Q}^\pi$ of $Q^\pi$, these tools allow controlling the error of this approximation by means of simple triangle inequality arguments, making possible the incorporation of the linear function approximation framework where $Q^\pi$ is approximated by a linear combination of the feature function $\phi$.

\section{Main result}
\label{sec:main}
In this section, we present our main result on the linear convergence of NPG. We start by introducing and discussing the assumptions and the algorithm.
\subsection{Algorithm and linear function approximation}
\label{sub:lin}
We make the following two assumptions on the linear approximation of $Q^\pi$, which are standard in the literature \citep{RN265,RN280}.
\begin{assumption} (Bias error)
	\label{ass:bias}
	Define the loss function
	\[L(w,\theta,v) := \E_{s, a\sim v}\left[\left(Q^{\pi_\theta} (s,a)-  w^\top \phi(s,a)\right)^2\right]\]
	and let
	\begin{equation}
		\label{eq:min}
		w_t\in\argmin_w L(w,\theta_t,d^t_\rho).
	\end{equation}
	Assume that $\forall t < T$ we have
	\[L(w_t,\theta_t,d^\star)\leq\varepsilon_\text{bias},\qquad L(w_t,\theta_t,d^{t+1}_\mu\circ\text{Unif}_\A)\leq\varepsilon_\text{bias}.\]
\end{assumption}
In order to better understand the implications of Assumption \ref{ass:bias}, we consider the trivial upper bound \citep{RN265}
\[L(w_t,\theta_t,d^\star)\leq \norm{\frac{d^\star}{d^t_\rho}}_\infty\!\!\!\!\! L(w_t,\theta_t,d^t_\rho)\leq \frac{1}{1-\gamma}\norm{\frac{d^\star}{\rho}}_\infty\!\!\!\!\! L(w_t,\theta_t,d^t_\rho).\]
This bound allows to think of $\varepsilon_\text{bias}$ in Assumption \ref{ass:bias} as controlling two quantities of interest. The first quantity is the loss incurred by the minimizer of $L(w,\theta_t,d^t_\rho)$, that is the best approximation $\widetilde{Q}^t =  w_t^\top \phi$ of $Q^t$ with respect to the squared error averaged over the distribution $d^t_\rho$. The second quantity is the shift in distribution from $d^t_\rho$ to $d^\star$ in the loss function. A similar conclusion for Assumption \ref{ass:bias} can be drawn for the the distribution $d^{t+1}_\mu\circ\text{Unif}_\A$.

\begin{assumption} (Statistical error)
	Assume that our estimate $\widehat{w}_t$ of $w_t$ is such that
	\label{ass:stat}
	\[\E_{s,a\sim d^t_\rho}\left[\left(\langle w_t-\widehat{w}_t, \phi(s,a)\rangle\right)^2\right]\leq\varepsilon_\text{stat}.\]
\end{assumption}

Assumption \ref{ass:stat} concerns the statistical error incurred when solving the minimization problem in~\eqref{eq:min} and it can be used to describe the sample complexity of the algorithm. Let $\widehat{Q}^\pi(s,a) = \widehat{w}_t^\top \phi(s,a)$ be the sample-based estimate of $\widetilde{Q}^\pi$. Then Assumption \ref{ass:stat} is equivalent to assuming that 
\[\E_{s,a\sim d^t_\rho}\left[\left(\widehat{Q}^\pi(s,a)-\widetilde{Q}^\pi(s,a)\right)^2\right]\leq\varepsilon_\text{stat}.\]
Several algorithms have been shown to satisfy Assumption \ref{ass:stat} with a number of samples that depends only on the dimension $d$ of $\phi$ and not on $|\S|$ or $|\A|$, such as temporal difference learning \citep{bhandari2018finite}. This represent an improvement over the sample complexity of tabular algorithms, where the typical assumption $\lVert \widehat{Q}^\pi-Q^\pi \rVert_\infty\leq\varepsilon_\text{stat}$ \citep{RN266, RN269} causes the sample complexity to depend on $|\S||\A|$.

With this set-up, we can formulate NPG with linear function approximation as in Algorithm \ref{alg}. At time step $t$, let $\D(t,\rho)$ be an oracle such that $\widehat{w}_t = \D(t,\rho)$ satisfies Assumption \ref{ass:stat}.

\begin{algorithm}[h]
	\caption{NPG with linear function approximation}
	\label{alg}
	\begin{algorithmic}
		\STATE \textbf{Input:} Learning rate schedule $(\eta_t)_{t\geq 0}$; number of iterations $T$; initialized policy $\pi^{(0)}$; distribution $\rho$; oracle $\D$.
		\FOR{$t=0,\dots,T-1$}
		\STATE Obtain $\widehat{w}_t = \D(t,\rho)$.
		\STATE Update \[\theta_{t+1}=\theta_t+\eta_t \widehat{w}_t.\]
		\ENDFOR
	\end{algorithmic}
\end{algorithm}

\begin{remark} (Tabular setting)
	It is possible to recover the tabular case by setting $d = |\S||\A|$ and $\phi(s,a)$ to be a vector of all zeros except a one in the position assigned to $(s,a)$. In this case, we recover the same update as the tabular setting and we have that $\varepsilon_\text{bias} = 0$, as by setting $w = Q^t$ we obtain $\E_{s\sim \nu, a\sim \text{Unif}_\A}\left[\left(Q^t (s,a)-  w^\top \phi(s,a)\right)^2\right] = 0\,$ for any distribution $\nu$.
\end{remark}
\begin{remark} (Linear MDPs)
	Another setting for which the bias error $\varepsilon_\text{bias}$ is equal to $0$ is that of Linear MDPs \citep{RN162}, where it is assumed that the transition probability distribution and the reward function can be expressed as a linear function of the feature function $\phi$. Namely, assume there exist two feature maps $\phi:\S\times\A\rightarrow\R^d$ and $\mu:\S\rightarrow\R^d$ and a vector $v_r\in\R^d$ such that
	\[P(s'|s,a)=\langle\pi(s,a),\mu(s')\rangle,\qquad r(s,a) = \langle v_r, \phi(s,a)\rangle\qquad\forall s,s'\in\S,a\in\A.\]

	If this assumption is satisfied, then we have that $\forall s\in\S,a\in\A$
	\[Q^\pi(s,a) = r(s,a) + \gamma\int_\S V^\pi(s')P(s'|s,a)ds' = \left\langle\phi(s,a),v_r + \gamma\int_\S V^\pi(s')\mu(s')ds'\right\rangle,\]
	which means that at each time step $t$ there exists a $w_t\in\R^d$ such that $Q^t(s,a) = \langle w_t,\phi(s,a)\rangle$ and $L(w_t,\theta_t,d^t_\mu) = 0$.
\end{remark}

\subsection{Linear convergence}
\label{sub:thm}
In order to present the main result of our work, we need two additional assumptions on the distribution mismatch coefficient and the feature covariance matrix.
\begin{assumption}
	\label{ass:mismatch}
	Assume that the distribution mismatch coefficient \[\nu_\mu = \frac{1}{1-\gamma}\norm{\frac{d^\star_\mu}{\mu}}_\infty\]
	is finite, i.e. $\nu_\mu<\infty$.
\end{assumption}
Assumption \ref{ass:mismatch} is a standard assumption in the policy optimization literature~\citep{RN265,RN266}. As we will see in Theorem \ref{thm}, the iteration complexity of Algorithm \ref{alg} depends polynomially on this term, meaning that the convergence rate is faster when the starting state distribution $\mu$ covers the whole state space.
\begin{assumption} (Relative condition number)
	\label{ass:cond}
	With respect to a distribution $v$, define \[\Sigma_v = \E_{s,a\sim v}\left[\phi(s,a)\phi(s,a)^\top\right]\]
	and assume that there exists a $\kappa<\infty$ such that \[\sup_{w\in\R^d}\frac{w^\top\Sigma_{d^\star}w}{w^\top\Sigma_{\rho}w}\leq\kappa,\qquad\sup_{w\in\R^d}\frac{w^\top(\Sigma_{d^t_\mu\circ\text{Unif}_\A})w}{w^\top\Sigma_{\rho}w}\leq\kappa\quad\forall t\leq T.\]
\end{assumption}
Assumption \ref{ass:cond} is a standard assumption in the linear function approximation literature and highlights the importance of choosing a state-action distribution $\rho$ with good coverage over the feature space, as it can be enforced by choosing the appropriate $\rho$. In fact, if $\Phi = \{\phi(s,a)|s\in\S,a\in\A\}$ is a compact set, there always exists a state-action distribution $\rho$ such that $\kappa\leq d$ (see Lemma 23 in \cite{RN265}). In general, if $\norm{\phi(s,a)}_2^2\leq B$ for all $s\in\S,a\in\A$, we have the crude bound $\kappa\leq B/\sigma_\text{min}(\Sigma_{\rho})$, where $\sigma_\text{min}(A)$ is the minimum eigenvalue of matrix $A$.

We are now ready to state the following theorem on the linear convergence of Algorithm \ref{alg}.
\begin{theorem} (Linear convergence of NPG with log-linear parametrization)
	\label{thm}
	Consider NPG as in Algorithm \ref{alg} and let Assumptions \ref{ass:bias}, \ref{ass:stat}, \ref{ass:mismatch}, and \ref{ass:cond} hold. If the step-size schedule satisfies \[\eta_{t+1}\geq \frac{\nu_\mu}{\nu_\mu-1}\eta_t \qquad \forall t,\] and $\eta_0\geq\frac{1-\gamma}{\gamma}\KL_t^\star$, then for every $T\geq 0$ we have
	\[  V^\star(\mu)-V^T(\mu)\leq\left(1-\frac{1}{\nu_\mu}\right)^T\frac{2}{1-\gamma}+2\nu_\mu\sqrt{\frac{|\A|\kappa\varepsilon_\text{stat}}{(1-\gamma)^3}}+ 2\nu_\mu\frac{\sqrt{|\A|\varepsilon_\text{bias}}}{1-\gamma}.\]
\end{theorem}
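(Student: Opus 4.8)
Write $\Phi_t := V^\star(\mu)-V^t(\mu)$ and $\widehat{Q}^t_s:=\widehat{w}_t^\top\phi(s,\cdot)$, so that the update in Algorithm~\ref{alg} reads $\nabla h(\pi^{t+1}_s)=\nabla h(\pi^t_s)+\eta_t\widehat{Q}^t_s$. The plan is to prove a one--step recursion
\[
\Phi_{t+1}\;\le\;\Big(1-\tfrac{1}{\nu_\mu}\Big)\Phi_t\;+\;\tfrac{1}{(1-\gamma)\nu_\mu}\Big(\tfrac{1}{\eta_t}(\KL^\star_t-\KL^\star_{t+1})+\mathcal E_t\Big),
\]
where $\mathcal E_t$ collects the approximation errors, and then to unroll it using the geometrically increasing step sizes and the initialization condition on $\eta_0$.

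\emph{Deriving the recursion.} Apply the performance difference lemma \eqref{lemma:pdiff} with $\pi=\pi^{t+1}$, $\bar\pi=\pi^\star$ to get $(1-\gamma)\Phi_{t+1}=\E_{s\sim d^\star_\mu}\langle Q^{t+1}_s,\pi^\star_s-\pi^{t+1}_s\rangle$, and split $Q^{t+1}=Q^t+(Q^{t+1}-Q^t)$. For the $Q^t$ part, the three--point identity \eqref{eq:kl} at $\pi_s=\pi^\star_s$ (discarding the nonnegative $\KL(\pi^{t+1}_s,\pi^t_s)$) together with adding and subtracting $\widehat{Q}^t_s$ gives
\[
\E_{s\sim d^\star_\mu}\langle Q^t_s,\pi^\star_s-\pi^{t+1}_s\rangle\;\le\;\tfrac{1}{\eta_t}(\KL^\star_t-\KL^\star_{t+1})+\E_{s\sim d^\star_\mu}\langle Q^t_s-\widehat{Q}^t_s,\pi^\star_s-\pi^{t+1}_s\rangle.
\]
For the $Q^{t+1}-Q^t$ part, first note the approximate monotonicity $Q^{t+1}\ge Q^t$: \eqref{eq:kl} at $\pi_s=\pi^t_s$ gives $\langle\widehat{Q}^t_s,\pi^{t+1}_s-\pi^t_s\rangle\ge0$, and combined with \eqref{lemma:pdiff} this yields $V^{t+1}_s\ge V^t_s$ — hence $Q^{t+1}\ge Q^t$ — up to an error from replacing $\widehat{Q}^t$ by $Q^t$. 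Monotonicity lets us drop $\langle Q^{t+1}_s-Q^t_s,\pi^{t+1}_s\rangle\ge0$ and keep only $\E_{s\sim d^\star_\mu}\langle Q^{t+1}_s-Q^t_s,\pi^\star_s\rangle=\gamma\,\E_{s\sim d^\star_\mu,a\sim\pi^\star_s,s'\sim P(\cdot|s,a)}[V^{t+1}_{s'}-V^t_{s'}]$; by the identity $\gamma\,\E_{s\sim d^\star_\mu,a\sim\pi^\star_s,s'\sim P(\cdot|s,a)}f(s')=\E_{s\sim d^\star_\mu}f(s)-(1-\gamma)\E_{s\sim\mu}f(s)$ (which follows from the definition of $d^\star_\mu$) this equals $\E_{s\sim d^\star_\mu}[V^{t+1}_s-V^t_s]-(1-\gamma)(\Phi_t-\Phi_{t+1})$, and bounding $\E_{s\sim d^\star_\mu}[V^{t+1}_s-V^t_s]\le\norm{d^\star_\mu/\mu}_\infty(V^{t+1}(\mu)-V^t(\mu))=(1-\gamma)\nu_\mu(\Phi_t-\Phi_{t+1})$ (using monotonicity again) gives $\E_{s\sim d^\star_\mu}\langle Q^{t+1}_s-Q^t_s,\pi^\star_s\rangle\le(1-\gamma)(\nu_\mu-1)(\Phi_t-\Phi_{t+1})$. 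Substituting the two bounds into $(1-\gamma)\Phi_{t+1}=\E_{s\sim d^\star_\mu}\langle Q^{t+1}_s,\pi^\star_s-\pi^{t+1}_s\rangle$ and solving for $\Phi_{t+1}$ produces the recursion, the coefficient $(\nu_\mu-1)/\nu_\mu=1-1/\nu_\mu$ being exactly what the distribution mismatch contributes.

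\emph{Errors and unrolling.} To bound $\mathcal E_t$, decompose $Q^t-\widehat{Q}^t=(Q^t-w_t^\top\phi)+(w_t-\widehat{w}_t)^\top\phi$, pull the action marginals of $\pi^\star_s$ and $\pi^{t+1}_s$ out by Cauchy--Schwarz via $\pi(a|s)\le|\A|\,\text{Unif}_\A(a)$ (the origin of the $\sqrt{|\A|}$), and then bound $\E_{s,a\sim d^\star}[(Q^t-w_t^\top\phi)^2]=L(w_t,\theta_t,d^\star)\le\varepsilon_\text{bias}$ using Assumption~\ref{ass:bias}, and $(w_t-\widehat{w}_t)^\top\Sigma_{d^\star}(w_t-\widehat{w}_t)\le\kappa\,(w_t-\widehat{w}_t)^\top\Sigma_\rho(w_t-\widehat{w}_t)\le\tfrac{\kappa}{1-\gamma}\varepsilon_\text{stat}$ using Assumptions~\ref{ass:cond} and \ref{ass:stat} (the last step through $d^t_\rho\ge(1-\gamma)\rho$). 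The errors generated inside the monotonicity step are handled identically, now using the second (distributional) parts of Assumptions~\ref{ass:bias} and \ref{ass:cond}, so that overall $\mathcal E_t=O\!\big(\sqrt{|\A|\,\varepsilon_\text{bias}}+\sqrt{|\A|\,\kappa\,\varepsilon_\text{stat}/(1-\gamma)}\big)$. To unroll, observe that the step--size condition $\eta_{t+1}\ge\tfrac{\nu_\mu}{\nu_\mu-1}\eta_t$ is precisely what makes the weights $(1-\tfrac1{\nu_\mu})^{T-1-t}/\eta_t$ nonincreasing in $t$, so Abel summation collapses $\sum_{t<T}(1-\tfrac1{\nu_\mu})^{T-1-t}\tfrac1{\eta_t}(\KL^\star_t-\KL^\star_{t+1})$ to a single term $\le(1-\tfrac1{\nu_\mu})^{T-1}\KL^\star_0/\eta_0$, which by $\eta_0\ge\tfrac{1-\gamma}{\gamma}\KL^\star_0$ and $\nu_\mu\ge\tfrac1{1-\gamma}$ is at most $(1-\tfrac1{\nu_\mu})^T/(1-\gamma)$; together with $\Phi_0\le1/(1-\gamma)$ this gives the leading term $(1-\tfrac1{\nu_\mu})^T\tfrac{2}{1-\gamma}$, while the remaining geometric series in $\mathcal E_t$ sums to $\nu_\mu$ times the per--step error and delivers the $\varepsilon_\text{stat}$ and $\varepsilon_\text{bias}$ terms.

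\emph{Main obstacle.} I expect two sticking points. The first is obtaining the contraction factor $1-1/\nu_\mu$: the bare three--point argument only relates the KL drop to $\Phi_t$, not to $\Phi_{t+1}$, and it is the extra split $Q^{t+1}=Q^t+(Q^{t+1}-Q^t)$ together with the flow identity for $d^\star_\mu$ that forces the distribution mismatch coefficient to appear as the factor turning the coefficient $1$ of $\Phi_{t+1}$ into $\nu_\mu$. The second is the error bookkeeping in the sample--based case: because monotonicity holds only up to an error, the errors must be carried weighted by $d^{t+1}_\mu$ rather than by the sampling distribution $d^t_\rho$, which is exactly why the relative condition number and both distributional forms of the bias assumption are needed, and all of $\sqrt{|\A|}$, $\kappa$, $\nu_\mu$ and the powers of $1-\gamma$ must be propagated consistently through the recursion.
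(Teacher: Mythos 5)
Your route is genuinely different from the paper's. The paper never applies the performance difference lemma at time $t+1$ against $\pi^\star$; instead it takes the three-point identity \eqref{eq:kl} at $\pi^\star_s$ under $d^\star_\mu$, splits $\langle\widehat Q^t_s,\pi^\star_s-\pi^{t+1}_s\rangle$ into $\langle\widehat Q^t_s,\pi^t_s-\pi^{t+1}_s\rangle+\langle\widehat Q^t_s,\pi^\star_s-\pi^t_s\rangle$, and changes measure from $d^\star_\mu$ to $d^{t+1}_\mu$ on the first piece using the \emph{exact} pointwise sign $\langle\widehat Q^t_s,\pi^{t+1}_s-\pi^t_s\rangle\ge 0$ (a consequence of strong convexity of $h$ and the update rule, independent of any estimation error); only \emph{after} this measure change does it replace $\widehat Q^t$ by $Q^t$, so the error is evaluated under $d^{t+1}_\mu\circ\mathrm{Unif}_\A$, exactly the distribution covered by Assumptions \ref{ass:bias} and \ref{ass:cond}. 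Your substitute for this step --- splitting $Q^{t+1}=Q^t+(Q^{t+1}-Q^t)$, using the Bellman/flow identity for $d^\star_\mu$, and changing measure on $V^{t+1}-V^t$ --- produces the identical recursion $\Phi_{t+1}\le(1-1/\nu_\mu)\Phi_t+\cdots$ in the exact case, and your Abel-summation unrolling with the step-size condition and $\nu_\mu\ge 1/(1-\gamma)$ is a correct equivalent of the paper's Lemma \ref{lemma:rec}.

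The genuine gap is in the sample-based (and even merely biased) case. Your argument uses pointwise monotonicity twice: to drop $\langle Q^{t+1}_s-Q^t_s,\pi^{t+1}_s\rangle\ge 0$, and to justify $\E_{s\sim d^\star_\mu}[V^{t+1}_s-V^t_s]\le\lVert d^\star_\mu/\mu\rVert_\infty\,\E_{s\sim\mu}[V^{t+1}_s-V^t_s]$. As soon as the update is driven by $\widehat Q^t\ne Q^t$, the guaranteed sign is that of $\langle\widehat Q^t_s,\pi^{t+1}_s-\pi^t_s\rangle$, not of $V^{t+1}_s-V^t_s$; the pointwise defect $(V^{t+1}_s-V^t_s)^-$ is controlled by $\E_{s'\sim d^{t+1}_s}\lvert\langle Q^t_{s'}-\widehat Q^t_{s'},\pi^{t+1}_{s'}-\pi^t_{s'}\rangle\rvert$, i.e.\ by the error under the $\pi^{t+1}$-visitation distribution started at \emph{that particular} $s$. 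When $s$ ranges over $d^\star_\mu$ (or over $P^{\pi^{t+1}}$ applied to $d^\star_\mu$, in the first drop), the resulting composite distributions are not among those covered by Assumptions \ref{ass:bias} and \ref{ass:cond}, so the claim that these errors are ``handled identically'' using the $d^{t+1}_\mu\circ\mathrm{Unif}_\A$ parts of the assumptions does not go through. One can dominate these composite distributions by $d^{t+1}_\mu$ at the cost of an extra factor of order $\nu_\mu$, but that degrades the error floor beyond what the theorem states. To close the argument as stated you should reorder the operations as the paper does: change measure on the sign-definite $\widehat Q^t$-inner product first, and only then convert $\widehat Q^t$ to $Q^t$ under $d^{t+1}_\mu$.
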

To the best of our knowledge, Theorem~\ref{thm} represents the first result establishing linear convergence rates for NPG with unbounded updates and without entropy regularization for the log-linear policy class. The convergence rate has no explicit dependence on the cardinality of the state and action spaces, with the exception of the two $|\A|$ terms which, as already highlighted by \cite{RN265}, can be removed with a path-dependent bound. In the case where $\varepsilon_\text{bias} = 0$ and $\varepsilon_\text{stat} = 0$, the theorem recovers the same convergence rate as Theorem 10 in \cite{RN266}.


\begin{remark} (Different policy parametrizations)
	\label{remark:1}
	While our work focuses on the log-linear policy class, it is possible to extend our framework and our analysis to general function approximation schemes. Let  $f_\theta:\S\times\A\rightarrow\R$ be a parameterized function and define the policy class $\{\pi_\theta|\theta\in\Theta\}$ as
	\[\pi_\theta(a|s) = \frac{\exp(f_\theta(s,a))}{\sum_{a'\in\A}\exp(f_\theta(s,a'))}.\]

	Let $g_\omega$ be a parametrized operator of $f_\theta$, define the loss function
	\[L(\omega,\theta,\nu) := \E_{s\sim \nu, a\sim \text{Unif}_\A}\left[\left(Q^{\pi_\theta} (s,a)- g_\omega(f_{\theta_t}(s,a))\right)^2\right]\]
	and let
	\[\omega_t\in\argmin_\omega L(\omega,\theta_t,d^t_\mu).\]
	Then, Assumption \ref{ass:bias} becomes \[L(\omega_t,\theta_t,d^\star_\mu)\leq\varepsilon_\text{bias}\qquad \forall t < T.\]
	The NPG update \eqref{eq:update} can then be formulated as requiring $\theta_{t + 1}$ to be such that \[f_{\theta_{t + 1}}(s,a) = f_{\theta_t}(s,a) + \eta_tg_{\omega_t}(f_{\theta_t}(s,a)) \qquad \forall s\in\S,a\in\A.\]

	Finding methods to solve this system of equations is beyond the scope of this work.
\end{remark}

\section{Analysis}
\label{sec:analysis}
In order to prove Theorem \ref{thm}, we need some intermediate results. The first one regards the decomposition of the statistical and the bias errors.
\begin{lemma}
	\label{lemma:appr}
	The expected error of the estimate $\widehat{Q}_s^t$ of $Q_s^t$ can be bounded as follows
	\[\left|E_{s\sim v}\langle Q_s^t-\widehat{Q}_s^t,\pi^t_s - \pi_s\rangle\right|\leq 2\sqrt{\frac{|\A|\kappa\varepsilon_\text{stat}}{1-\gamma}}+ 2\sqrt{|\A|\varepsilon_\text{bias}}\qquad \forall t < T,\]
	for both $v = d_\mu^{t+1}, \pi_s = \pi^{t+1}_s$ and $v = d_\mu^\star, \pi_s = \pi^\star_s$.
\end{lemma}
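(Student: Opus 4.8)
The plan is to pass through the best linear predictor $\widetilde{Q}^t_s := w_t^\top\phi(s,\cdot)$ of $Q^t_s$ and to write $Q^t_s-\widehat{Q}^t_s = (Q^t_s-\widetilde{Q}^t_s)+(\widetilde{Q}^t_s-\widehat{Q}^t_s)$, where $\widehat{Q}^t_s := \widehat{w}_t^\top\phi(s,\cdot)$ is the estimate actually used in Algorithm~\ref{alg}. By the triangle inequality, it then suffices to bound $\bigl|\E_{s\sim v}\langle Q^t_s-\widetilde{Q}^t_s,\pi^t_s-\pi_s\rangle\bigr|$ by $2\sqrt{|\A|\varepsilon_\text{bias}}$ (the bias term) and $\bigl|\E_{s\sim v}\langle \widetilde{Q}^t_s-\widehat{Q}^t_s,\pi^t_s-\pi_s\rangle\bigr|$ by $2\sqrt{|\A|\kappa\varepsilon_\text{stat}/(1-\gamma)}$ (the statistical term), for $v\in\{d^{t+1}_\mu,d^\star_\mu\}$; note that the concrete identity of $\pi_s$ never enters, only the fact that it is a probability distribution on $\A$.

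Both terms would be handled with the same elementary bound. For any $f_s:\A\to\R$ and any probability distribution $p$ on $\A$, Jensen's inequality and $p(a)\le 1$ give $|\langle f_s,p\rangle| = |\E_{a\sim p}f_s(a)| \le \sqrt{\sum_a p(a)f_s(a)^2} \le \sqrt{\sum_a f_s(a)^2} = \sqrt{|\A|\,\E_{a\sim\text{Unif}_\A}[f_s(a)^2]}$. Splitting $\langle f_s,\pi^t_s-\pi_s\rangle = \langle f_s,\pi^t_s\rangle - \langle f_s,\pi_s\rangle$, applying this with $p=\pi^t_s$ and with $p=\pi_s$, taking $\E_{s\sim v}$, and using concavity of the square root once more (this time over $s$), one obtains
\[\bigl|\E_{s\sim v}\langle f_s,\pi^t_s-\pi_s\rangle\bigr| \le 2\sqrt{|\A|\,\E_{s\sim v,\,a\sim\text{Unif}_\A}\bigl[f(s,a)^2\bigr]}.\]

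It then remains to instantiate $f$. Taking $f = Q^t-\widetilde{Q}^t$, the expectation on the right is exactly $L(w_t,\theta_t,v\circ\text{Unif}_\A)$, which equals $L(w_t,\theta_t,d^\star)$ for $v=d^\star_\mu$ and is bounded by $\varepsilon_\text{bias}$ for both $v=d^\star_\mu$ and $v=d^{t+1}_\mu$ by Assumption~\ref{ass:bias}, giving the bias term. Taking $f = \widetilde{Q}^t-\widehat{Q}^t = (w_t-\widehat{w}_t)^\top\phi$, the expectation on the right equals $(w_t-\widehat{w}_t)^\top\Sigma_{v\circ\text{Unif}_\A}(w_t-\widehat{w}_t)$; applying Assumption~\ref{ass:cond} (the first bound for $v=d^\star_\mu$, the second for $v=d^{t+1}_\mu$) bounds this by $\kappa\,(w_t-\widehat{w}_t)^\top\Sigma_\rho(w_t-\widehat{w}_t)$, and since the discounted visitation distribution satisfies $d^t_\rho\ge(1-\gamma)\rho$ coordinatewise, hence $\Sigma_{d^t_\rho}\succeq(1-\gamma)\Sigma_\rho$, this is in turn at most $\tfrac{\kappa}{1-\gamma}(w_t-\widehat{w}_t)^\top\Sigma_{d^t_\rho}(w_t-\widehat{w}_t) = \tfrac{\kappa}{1-\gamma}\E_{s,a\sim d^t_\rho}\bigl[\langle w_t-\widehat{w}_t,\phi(s,a)\rangle^2\bigr]\le\tfrac{\kappa\varepsilon_\text{stat}}{1-\gamma}$ by Assumption~\ref{ass:stat}. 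Substituting gives the statistical term, and summing the two completes the argument.

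I do not expect a genuine obstacle: the proof is a decomposition followed by Cauchy--Schwarz/Jensen and the four standing assumptions. The points needing care are (i) the passage from the policy-weighted inner products $\langle f_s,\pi^t_s\rangle$ and $\langle f_s,\pi_s\rangle$ to the second moments under $\text{Unif}_\A$ appearing in the assumptions, which is precisely where the factor $\sqrt{|\A|}$ and the factor $2$ enter; and (ii) routing the statistical error through the relative condition number together with the pointwise domination $\Sigma_{d^t_\rho}\succeq(1-\gamma)\Sigma_\rho$, which is what produces the $1/(1-\gamma)$ inside the square root. One should also double-check that the product distributions $v\circ\text{Unif}_\A$ for $v\in\{d^\star_\mu,d^{t+1}_\mu\}$ are exactly those covered by Assumptions~\ref{ass:bias} and~\ref{ass:cond}, using the index shift $t\mapsto t+1\le T$ in the latter.
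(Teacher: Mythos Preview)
Your proposal is correct and follows essentially the same route as the paper: decompose $Q^t-\widehat{Q}^t$ through the best linear predictor $\widetilde{Q}^t$, split the inner product against $\pi^t_s-\pi_s$ into two policy-weighted expectations, pass to $\text{Unif}_\A$ at the cost of a factor $\sqrt{|\A|}$, and then invoke Assumption~\ref{ass:bias} for the bias piece and Assumptions~\ref{ass:cond}, \ref{ass:stat} together with $d^t_\rho\ge(1-\gamma)\rho$ for the statistical piece. The only cosmetic difference is the order in which you apply Jensen (pointwise in $s$ first, then over $s$) versus the paper (jointly over $(s,a)$); both yield the same bound.
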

\begin{proof}[Proof of Lemma~\ref{lemma:appr}]
	We start by adding and subtracting $\widehat{Q}_s^t$
	\begin{align*}
		\left|E_{s\sim v}\langle Q_s^t-\widehat{Q}_s^t,\pi^t_s - \pi^{t+1}_s\rangle\right|&\leq \left|\E_{s\sim v}\langle \widetilde{Q}_s^t-\widehat{Q}_s^t,\pi^t_s-\pi_s\rangle\right|+ \left|\E_{s\sim v}\langle Q_s^t-\widetilde{Q}_s^t,\pi^t_s-\pi_s\rangle\right|.
	\end{align*}
	We then bound the two terms on the right-hand side separately. For the first term, we have that
	\begin{align*}
		&\left|\E_{s\sim v}\langle \widetilde{Q}_s^t-\widehat{Q}_s^t,\pi^t_s-\pi_s\rangle\right|\\ 
		&\leq\left|\E_{s\sim v, a \sim \pi^t_s}\left[( w_t-\widehat{w}_t)\phi(s,a)\right]\right|+\left|\E_{s\sim v, a \sim \pi_s}\left[( w_t-\widehat{w}_t)\phi(s,a)\right]\right|\\
		&\leq \sqrt{\E_{s\sim v, a \sim \pi^t_s}\left[\left(( w_t-\widehat{w}_t)\phi(s,a)\right)^2\right]}
		+\sqrt{\E_{s\sim v, a \sim \pi_s}\left[\left(( w_t-\widehat{w}_t)\phi(s,a)\right)^2\right]}\\
		&\leq2\sqrt{|\A|\E_{s\sim v, a \sim \text{Unif}_{\A}}\left[\left(( w_t-\widehat{w}_t)\phi(s,a)\right)^2\right]}= 2\sqrt{|\A|\norm{ w_t-\widehat{w}_t}^2_{\Sigma_{v\circ\text{Unif}_{\A}}}},
	\end{align*}
	where $\norm{w}^2_\Sigma:=w^\top\Sigma w$. Using Assumption \ref{ass:cond} and the fact that $(1-\gamma)\rho\leq d^t_\rho$ we have
	\[\norm{ w_t-\widehat{w}_t}^2_{\Sigma_{v\circ\text{Unif}_{\A}}}\leq\kappa\norm{ w_t-\widehat{w}_t}^2_{\Sigma_{\rho}}\leq\frac{\kappa}{1-\gamma}\norm{ w_t-\widehat{w}_t}^2_{\Sigma_{d_\rho^{t}}}\leq\frac{\kappa\varepsilon_\text{stat}}{1-\gamma}.\]
	Similarly, for the second term we have 
	\begin{align*}
		\left|\E_{s\sim v}\langle Q_s^t-\widetilde{Q}_s^t,\pi^t_s-\pi_s\rangle\right|&\leq2\sqrt{|\A|\E_{s\sim v, a \sim \text{Unif}_{\A}}\left[\left( Q^t(s,a)-\widetilde{Q}^t(s,a)\right)^2\right]}\leq 2\sqrt{|\A|\varepsilon_\text{bias}}.
	\end{align*}\end{proof}

For ease of exposition, in the rest of this section denote
\[\tau:=2\sqrt{\frac{|\A|\kappa\varepsilon_\text{stat}}{1-\gamma}}+ 2\sqrt{|\A|\varepsilon_\text{bias}}.\]

The next lemma regards the quasi-monotonic improvements of Algorithm~\ref{alg}. Let $\widehat{Q}^t_s = \widehat{w}_t^\top\phi(s,\cdot)$.
\begin{lemma}
	\label{lemma:2}
    The updates of Algorithm~\ref{alg} satisfy, for all $s\in\S$,
	\[\langle \widehat{Q}^t_s,\pi^{t+1}_s-\pi^t_s\rangle\geq 0\]
	and
	\begin{equation*}
		V^{t+1}(\mu)-V^t(\mu)\geq -\frac{\tau}{1-\gamma}.
	\end{equation*}
\end{lemma}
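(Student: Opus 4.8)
The plan is to first establish the inequality $\langle \widehat{Q}^t_s, \pi^{t+1}_s - \pi^t_s\rangle \geq 0$, and then use it together with the performance difference lemma and Lemma~\ref{lemma:appr} to derive the lower bound on $V^{t+1}(\mu) - V^t(\mu)$. For the first part, I would apply the three-point identity \eqref{eq:kl} with the choice $\pi_s = \pi^t_s$: this gives $\KL(\pi^t_s, \pi^t_s) - \KL(\pi^t_s, \pi^{t+1}_s) - \KL(\pi^{t+1}_s, \pi^t_s) = -\eta_t \langle \widehat{Q}^t_s, \pi^{t+1}_s - \pi^t_s\rangle$. Since $\KL(\pi^t_s, \pi^t_s) = 0$ and both remaining KL terms are nonnegative, the left-hand side is $\leq 0$, and since $\eta_t > 0$ this forces $\langle \widehat{Q}^t_s, \pi^{t+1}_s - \pi^t_s\rangle \geq 0$. (Here I am using that the NPG update \eqref{eq:update}, as it appears in Algorithm~\ref{alg} with $\widehat{w}_t$ in place of $w_t$, is precisely the mirror-descent update with $Q^t_s$ replaced by $\widehat{Q}^t_s$, so \eqref{eq:kl} holds verbatim with $Q^t_s \rightsquigarrow \widehat{Q}^t_s$.)

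For the second part, I would apply the performance difference lemma \eqref{lemma:pdiff} with $\pi = \pi^{t+1}$ and $\bar\pi = \pi^t$:
\[
V^{t+1}(\mu) - V^t(\mu) = \frac{1}{1-\gamma}\,\E_{s\sim d^t_\mu}\langle Q^{t+1}_s, \pi^{t+1}_s - \pi^t_s\rangle.
\]
This is not quite what we want, since the bracket involves $Q^{t+1}$ rather than $\widehat{Q}^t$. The cleaner route is instead to apply \eqref{lemma:pdiff} with $\pi = \pi^t$, $\bar\pi = \pi^{t+1}$, which yields
\[
V^t(\mu) - V^{t+1}(\mu) = \frac{1}{1-\gamma}\,\E_{s\sim d^{t+1}_\mu}\langle Q^t_s, \pi^t_s - \pi^{t+1}_s\rangle,
\]
so that
\[
V^{t+1}(\mu) - V^t(\mu) = \frac{1}{1-\gamma}\,\E_{s\sim d^{t+1}_\mu}\langle Q^t_s, \pi^{t+1}_s - \pi^t_s\rangle.
\]
Now I would add and subtract $\widehat{Q}^t_s$ inside the bracket: the term $\frac{1}{1-\gamma}\E_{s\sim d^{t+1}_\mu}\langle \widehat{Q}^t_s, \pi^{t+1}_s - \pi^t_s\rangle$ is nonnegative by the first part (since $d^{t+1}_\mu(s) \geq 0$ and $1-\gamma > 0$), and the remaining term is $\frac{1}{1-\gamma}\E_{s\sim d^{t+1}_\mu}\langle Q^t_s - \widehat{Q}^t_s, \pi^{t+1}_s - \pi^t_s\rangle$, whose absolute value is bounded by $\frac{\tau}{1-\gamma}$ by Lemma~\ref{lemma:appr} applied with $v = d^{t+1}_\mu$, $\pi_s = \pi^{t+1}_s$ (note $\langle Q^t_s - \widehat{Q}^t_s, \pi^{t+1}_s - \pi^t_s\rangle = -\langle Q^t_s - \widehat{Q}^t_s, \pi^t_s - \pi^{t+1}_s\rangle$, so the sign is immaterial). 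Combining, $V^{t+1}(\mu) - V^t(\mu) \geq 0 - \frac{\tau}{1-\gamma} = -\frac{\tau}{1-\gamma}$.

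I do not anticipate a serious obstacle here; the lemma is essentially a bookkeeping exercise combining three tools already in hand. The one point requiring mild care is the choice of which way to orient the performance difference lemma so that the visitation distribution $d^{t+1}_\mu$ appears (matching the distribution for which Assumption~\ref{ass:bias} and hence Lemma~\ref{lemma:appr} give control), rather than $d^t_\mu$, which is why I would invoke \eqref{lemma:pdiff} with the roles of $\pi^t$ and $\pi^{t+1}$ swapped relative to the naive choice. The other point worth stating explicitly is that \eqref{eq:kl} continues to hold with $\widehat{Q}^t_s$ in place of $Q^t_s$ because Algorithm~\ref{alg} performs exactly the mirror-descent update \eqref{eq:update} driven by $\widehat{w}_t^\top\phi$.
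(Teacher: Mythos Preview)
Your proposal is correct and matches the paper's proof almost exactly. The only cosmetic difference is in the first inequality: the paper obtains $\langle \widehat{Q}^t_s,\pi^{t+1}_s-\pi^t_s\rangle\geq 0$ directly from the $1$-strong convexity of $h$ (i.e.\ $\langle \nabla h(\pi^{t+1}_s)-\nabla h(\pi^t_s),\pi^{t+1}_s-\pi^t_s\rangle\geq\|\pi^{t+1}_s-\pi^t_s\|_2^2\geq 0$ together with $\nabla h(\pi^{t+1}_s)-\nabla h(\pi^t_s)=\eta_t\widehat{Q}^t_s$), whereas you specialize the three-point identity \eqref{eq:kl} to $\pi_s=\pi^t_s$; these are equivalent one-line arguments. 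For the second inequality your orientation of the performance difference lemma, the add-and-subtract of $\widehat{Q}^t_s$, and the appeal to Lemma~\ref{lemma:appr} with $v=d^{t+1}_\mu$ are identical to the paper's steps.
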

\begin{proof}[Proof of Lemma~\ref{lemma:2}]
    By 1-strong convexity of $h$ on $(0,1)$, we have $\forall s\in\S$
	\[0\leq\norm{\pi^{t+1}_s - \pi^t_s}_2^2\leq\langle \nabla h(\pi^{t+1}_s) - \nabla h(\pi^t_s),\pi^{t+1}_s - \pi^t_s\rangle = \langle \widehat{Q}^t_s,\pi^{t+1}_s - \pi^t_s \rangle.\]
	As to the second inequality, we use the performance difference lemma \eqref{lemma:pdiff} and Lemma~\ref{lemma:appr} to obtain
	\begin{align*}
		(1-\gamma)(V^{t+1}(\mu)-V^t(\mu))&=\E_{s\sim d_\mu^{t+1}}\langle Q_s^t,\pi^{t+1}_s-\pi_s^t\rangle\\
		&=\E_{s\sim d_\mu^{t+1}}\langle \widehat{Q}_s^t,\pi^{t+1}_s-\pi_s^t\rangle + \E_{s\sim d_\mu^{t+1}}\langle Q_s^t-\widehat{Q}_s^t,\pi^{t+1}_s-\pi_s^t\rangle\\
		&\geq -\frac{\tau}{1-\gamma}.
	\end{align*}
\end{proof}

The last result we need is the following lemma, which can be straightforwardly proven by induction. 
\begin{lemma}
	\label{lemma:rec}
	Suppose $0<\alpha<1, b>0$ and a nonnegative sequence $\{a_k\}$ satisfies \[a_{k+1}\leq\alpha a_k + b\qquad\forall k\geq 0.\] Then for all $k\geq 0$,\[a_k\leq\alpha^k a_0+\frac{b}{1-\alpha}.\]
\end{lemma}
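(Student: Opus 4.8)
\textbf{Proof plan for Lemma~\ref{lemma:rec}.} The plan is to prove the bound by induction on $k$, exactly as the statement suggests. For the base case $k=0$, the claimed inequality reads $a_0 \le a_0 + \frac{b}{1-\alpha}$, which holds because $b>0$ and $0<\alpha<1$ force $\frac{b}{1-\alpha}>0$. For the inductive step, suppose the bound holds at index $k$, i.e.\ $a_k \le \alpha^k a_0 + \frac{b}{1-\alpha}$. Applying the hypothesis $a_{k+1}\le\alpha a_k + b$ and then the inductive assumption gives
\[
a_{k+1} \le \alpha a_k + b \le \alpha\Bigl(\alpha^k a_0 + \tfrac{b}{1-\alpha}\Bigr) + b = \alpha^{k+1} a_0 + \frac{\alpha b}{1-\alpha} + b,
\]
and the proof is finished by the elementary identity $\frac{\alpha b}{1-\alpha} + b = \frac{\alpha b + (1-\alpha)b}{1-\alpha} = \frac{b}{1-\alpha}$, which yields $a_{k+1}\le \alpha^{k+1}a_0 + \frac{b}{1-\alpha}$ and closes the induction.

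Alternatively, one can avoid induction and unroll the recursion directly: iterating $a_{k+1}\le \alpha a_k + b$ gives $a_k \le \alpha^k a_0 + b\sum_{j=0}^{k-1}\alpha^j$, and bounding the finite geometric sum by its infinite counterpart $\sum_{j=0}^{\infty}\alpha^j = \frac{1}{1-\alpha}$ (valid since $0<\alpha<1$) produces the result immediately. Either route is a few lines.

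There is essentially no obstacle here: the only points requiring any care are that $0<\alpha<1$ is precisely what makes the geometric series converge (equivalently, makes $\alpha^k$ nonincreasing and $\frac{b}{1-\alpha}$ finite), and one may note in passing that the nonnegativity of $\{a_k\}$ is not actually used for the stated inequality — it is presumably recorded only because the sequence to which the lemma will be applied (a value-function suboptimality gap) is nonnegative anyway.
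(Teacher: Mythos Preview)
Your proposal is correct and follows exactly the approach the paper indicates: the paper does not spell out a proof but simply states that the lemma ``can be straightforwardly proven by induction,'' and your inductive argument (and the equivalent unrolling via the finite geometric sum) is precisely that straightforward verification.
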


With these results in place, we are ready to prove Theorem \ref{thm}.

\begin{proof}[Proof of Theorem \ref{thm}]
Let \[ \nu_k = \norm{\frac{d^\star}{d_\mu^{t+1}}}_\infty\]
and consider the equality in \eqref{eq:kl}
\[\KL(\pi_s,\pi^t_s) - \KL(\pi_s,\pi^{t+1}_s) - \KL(\pi^{t+1}_s,\pi^t_s) = -\eta_t\langle Q^t_s,\pi^{t+1}_s - \pi_s\rangle\quad\forall\pi_s.\]
Then, for $\pi_s = \pi^\star_s$ we have that
\begin{equation}
	\label{eq:kl2}
	\E_{s\sim d_\mu^\star}\langle \widehat{Q}^t_s,\pi^t_s - \pi^{t+1}_s\rangle + \E_{s\sim d_\mu^\star}\langle \widehat{Q}^t_s,\pi^\star_s - \pi^t_s\rangle\leq \KL^\star_t - \KL^\star_{t+1}.
\end{equation}
We bound the two terms on the left-hand side separately. For the first one, we have that
\begin{align*}
	\E_{s\sim d_\mu^\star}&\langle \widehat{Q}^t_s,\pi^t_s - \pi^{t+1}_s\rangle\\&\geq \norm{\frac{d^\star}{d_\mu^{t+1}}}_\infty\E_{s\sim d_\mu^{t+1}}\langle \widehat{Q}_s^t,\pi^t_s - \pi^{t+1}_s\rangle\\
	&= \nu_{k+1}(1-\gamma)\left(V^t(\mu)-V^{t+1}(\mu)\right)+ \nu_{k+1}\E_{s\sim d_\mu^{t+1}}\langle \widehat{Q}_s^t-Q_s^t,\pi^t_s - \pi^{t+1}_s\rangle\\
	&\geq  \nu_{k+1}(1-\gamma)\left(V^t(\mu)-V^{t+1}(\mu)\right)- \nu_{k+1}\tau,
\end{align*}
where the first inequality is due to Lemma~\ref{lemma:2}, the equality is due to the performance difference lemma~\eqref{lemma:pdiff} and the second inequality is due to Lemma~\ref{lemma:appr}. We use Lemma~\ref{lemma:appr} again to bound the second term in the left-hand side of \eqref{eq:kl2}
\begin{align*}
	\E_{s\sim d_\mu^\star}\langle \widehat{Q}^t_s,\pi^\star_s - \pi^t_s\rangle &= \E_{s\sim d_\mu^\star}\langle Q^t_s,\pi^\star_s - \pi^t_s\rangle + \E_{s\sim d_\mu^\star}\langle \widehat{Q}^t_s-Q^t_s,\pi^\star_s - \pi^t_s\rangle\\
	&\geq (1-\gamma)\left(V^\star(\mu)-V^t(\mu)\right) - \tau.
\end{align*}
Plugging the two bounds in \eqref{eq:kl2} we obtain
\[ \nu_{k+1}\left(\Delta_{t+1}-\Delta_t-\frac{\tau}{1-\gamma}\right)+\Delta_t\leq\frac{\KL^\star_t}{(1-\gamma)\eta_t} -\frac{\KL^\star_{t+1}}{(1-\gamma)\eta_t}+\frac{\tau}{1-\gamma},\]
where $\Delta_t = V^\star(\mu)-V^t(\mu)$. From Lemma \ref{lemma:2} we have that $\Delta_{t+1}-\Delta_t-\frac{\tau}{1-\gamma}\leq0$, so, since $\nu_{t+1}\leq \nu_\mu$, we can replace $\nu_{t+1}$ with $\nu_\mu$ and write
\[ \nu_\mu\left(\Delta_{t+1}-\Delta_t\right)+\Delta_t\leq\frac{\KL^\star_t}{(1-\gamma)\eta_t} -\frac{\KL^\star_{t+1}}{(1-\gamma)\eta_t}+\frac{(1+\nu_\mu)\tau}{1-\gamma}.\]
Rearranging and dividing by $\nu_\mu$ we obtain
\[\Delta_{t+1} + \frac{\KL^\star_{t+1}}{(1-\gamma) \nu_\mu\eta_t} \leq \left(1-\frac{1}{ \nu_\mu}\right)\left(\Delta_t+\frac{\KL^\star_t}{(1-\gamma)\eta_t( \nu_\mu-1)}\right) +\left(1+\frac{1}{\nu_\mu}\right)\frac{\tau}{1-\gamma}.\]
If the step sizes satisfy $\eta_{t+1}( \nu_\mu-1)\geq\eta_t \nu_\mu$, then
\[\Delta_{t+1} + \frac{\KL^\star_{t+1}}{(1-\gamma)\eta_{t+1}( \nu_\mu-1)} \leq \left(1-\frac{1}{ \nu_\mu}\right)\left(\Delta_t+\frac{\KL^\star_t}{(1-\gamma)\eta_t( \nu_\mu-1)}\right) +\frac{2\tau}{1-\gamma}\]

where we used that $\nu_\mu\geq 1$. The proof of the theorem follows by applying Lemma \ref{lemma:rec}.
\end{proof}

\section{Conclusion}
\label{conc}
We show how unregularized NPG can be tuned to achieve linear convergence for the log-linear policy class up to an error floor that depends on the statistical error of our estimates of $Q^t$ and the bias error of the best linear approximation of $Q^t$. Our results fill the gap between the findings in the tabular setting and the log-linear policy setting, taking advantage of a mirror-descent type analysis, and address research directions outlined in previous works \citep{RN266}. The main future direction is that of extending our framework and results to general policy parametrizations, as we suggest in Remark \ref{remark:1}.

\bibliography{References}
\bibliographystyle{plainnat}
\end{document}